\documentclass{llncs}

\usepackage{amssymb}
\usepackage{amsmath}
\usepackage{algorithm}
\usepackage{algpseudocode}
\usepackage[all]{xy}

\usepackage{graphicx}
\usepackage{xspace}

\ifx\pdftexversion\undefined
  \usepackage[dvips,usenames]{color}
\else
  \usepackage[pdftex,usenames,dvipsnames]{color}
\fi
\usepackage{listings}

\usepackage{pgfplots}

\definecolor{lightgray}{rgb}{0.97, 0.97, 0.97}

\lstdefinelanguage{minizinc}
{
morekeywords={
  ann, annotation, any, array, assert, bool, constraint, dominance_nogood,
  else, endif, enum,
exists, float, forall, function,
if, in, include, int, list, of, op, output, par, predicate, record, set,
solve, string, test, then, tuple, type, var, where,
abort, abs, acosh, array_intersect, array_union,
array1d, array2d, array3d, array4d, array5d, array6d, asin, assert, atan, bool2int, card,
ceil, combinator, concat, cos, cosh, dom, dom_array, dom_size, dominance,
fix, exp, floor, index_set, index_set_1of2,
index_set_2of2, index_set_1of3, index_set_2of3, index_set_3of3, int2float, is_fixed,
join, lb, lb_array, length, let, ln, log, log2, log10, min, max, not pow, product, round, set2array,
show, show_int, show_float, sin, sinh, sqrt, sum, tan, tanh, trace, ub, and ub_array,
minisearch, search, while, repeat, next, commit, print, post, sol, scope, time_limit, break, fail
},
sensitive=false, 
morecomment=[l][\em\color{ForestGreen}]{\%},
morestring=[b]",
}

\lstset{ %
  backgroundcolor=\color{lightgray},   
  basicstyle=\footnotesize\ttfamily,        
  belowskip=-2em,
  breakatwhitespace=false,         
  breaklines=true,                 
  captionpos=b,                    
  commentstyle=\color{ForestGreen},    
  escapeinside={@}{@)},          
  extendedchars=true,              
  frame=single,                    
  keepspaces=true,                 
  keywordstyle=\bfseries\color{blue},       
  language=minizinc,                
  numbers=none,                    
  numbersep=5pt,                   
  numberstyle=\tiny\color{Gray}, 
  rulecolor=\color{black},         
  showspaces=false,                
  showstringspaces=false,          
  showtabs=false,                  
  stepnumber=1,                    
  stringstyle=\color{Red},     
  tabsize=2,                       
  title=\lstname                   
}

\def\mzninline{\verb}




\newcommand{\sols}{\ensuremath{{\cal S}}}
\newcommand{\vpar}{}

\newcommand{\pjs}[1]{\marginpar{\sc pjs}\textcolor{blue}{#1}}
\newcommand{\tias}[1]{\marginpar{\sc tias}\textcolor{magenta}{#1}}

\newcommand{\ignore}[1]{}

\newcommand{\MiniZinc}{\mbox{MiniZinc}\xspace}
\newcommand{\FlatZinc}{\mbox{FlatZinc}\xspace}

\title{Solution Dominance over Constraint Satisfaction Problems}

\author{Tias Guns\inst{1} \and Peter J. Stuckey\inst{2}\and Guido Tack\inst{2}}
\institute{VUB Brussels, Belgium \\
  \email{tias.guns@vub.be}
  \and
  Data61 CSIRO \& Monash University, Australia \\
  \email{pstuckey@unimelb.edu.au}  \\
  \email{guido.tack@monash.edu}
}

\hyphenation{industry}

\pagestyle{plain}  

\begin{document}

\maketitle

\begin{abstract}
Constraint Satisfaction Problems (CSPs) typically have many solutions that satisfy all constraints. Often though, some solutions are preferred over others, that is, some solutions \textit{dominate} other solutions.
We present \textit{solution dominance} as a formal framework to reason about such settings. We define Constraint Dominance Problems (CDPs) as CSPs with a dominance relation, that is, a preorder over the solutions of the CSP.
This framework captures
many well-known variants of constraint satisfaction, including optimization, multi-objective optimization, Max-CSP, minimal models, minimum correction subsets as well as optimization over CP-nets and arbitrary dominance relations.

We extend \MiniZinc, a declarative language for modeling CSPs, to CDPs by introducing dominance nogoods; these can be derived from dominance relations in a principled way.
A generic method for solving arbitrary CDPs incrementally calls a CSP solver
and is compatible with any existing solver that supports MiniZinc.
This encourages experimenting with different solution dominance relations for a problem, as well as comparing different solvers without having to modify their implementations.

\end{abstract}

\section{Introduction}
Constraint satisfaction has proven to be an indispensable paradigm for solving complex problems in A.I. and industry. Indeed, many such problems can be expressed as a conjunction of constraints over variables, including logical constrains, mathematical relations and sophisticated \textit{global} constraints such as automata.

However, for many applications, the true problem to be solved is not a satisfaction problem, though satisfaction is a critical component. For example, in many cases one is interested in finding a solution that minimizes an objective function, rather than any satisfying solution.
Many other settings exist in which some satisfying solutions are more interesting or preferred than others, that is, some solutions \textit{dominate} other solutions.

We introduce \textit{solution dominance} as a way to express such dominance relations over the solutions of a CSP. A dominance relation here defines a preorder over the solutions. In line with Constraint Satisfaction Problems and Constraint Optimization Problems, we call the resulting problems Constraint Dominance Problems (CDPs). The goal is to find all non-dominating solutions to the CDP, that is, the Pareto optimal set. We discuss two variants, depending on whether equivalent solutions are allowed in the solution set or not.

Our work generalizes the work on preferences in SAT~\cite{DiRosa:2010:SSP:1842832.1842846}, where a strict partial order over literals is used. This captures MinOne, MaxSAT and Minimum Correction Subsets. Our work generalizes this and other works expressing preference as strict (irreflexive) partial orders, because: 1) (reflexive) preorders give us the freedom to reason both about solution sets that do or do not allow for equivalent solutions; 2) the goal is not to find all dominated (preferred) solutions, but rather all \textit{non-dominated} solutions; hence 3) the investigated approach also capture multi-objective optimization and more. The set of non-dominated solutions is known as the Pareto frontier or the \textit{efficient} set in multi-objective optimization~\cite{Ehrgo00b} and our formalization is inspired by that, but can reason over arbitrary partial orders. 

Preferences in SAT~\cite{DiRosa:2010:SSP:1842832.1842846} are methodologically different from preferences expressed through Conditional Preference networks (CP-nets, \cite{DBLP:journals/jair/BoutilierBDHP04}), because the latter requires expensive dominance checks. We investigate a novel dominance relation for CP-nets, and show that all these types of preferences can fit in the same framework and methodology.
The framework can also express arbitrary solution dominance relations, including other forms of conditional preferences than CP-nets. This is motivated by recently studied data mining problems involving conditional dominance relations over CSPs~\cite{cp4im_domin}. A further discussion of related work is provided in Section~\ref{sec:relwork}.

Inspired by declarative languages for modeling
CSPs~\cite{minizinc,essence,OPL} we propose an extension to the
\MiniZinc modeling language that enables the formulation of CDPs. The idea
is to specify a \textit{dominance nogood}, a constraint pattern that can be used whenever a solution is found during search to exclude dominated solutions in the remaining search (analogous to the well-known branch-and-bound approach for optimization). Dominance nogoods can be derived from the
solution dominance relation in a principled way. 
We present an intuitive implementation of a generic algorithm for dominance nogoods
in the MiniSearch~\cite{Rendl15MiniSearch} meta-search language. We explore the possibilities of this approach on a number of
problems and with a range of different solvers to show the potential of such a generic approach.

\section{CSPs and solution dominance} 
Conceptually, solution dominance can be used to solve problems of the form: find $\{X \in {\cal S} | \nexists Y \in {\cal S} ...\}$ where $\sols$ is the set of all solutions of a CSP.

A \emph{Constraint Satisfaction Problem (CSP)} is a triple $(V,D,C)$ where $V$ is a
set of variables, $D$ is a mapping from variables to a set of values, 
and $C$ is a set of constraints over (a subset of) $V$.
Constraints can represent arbitrary complex relations over the variables.
A valuation $X$ is a mapping of variables to values: $\forall v \in V: v \mapsto D(v)$. 
A \emph{solution} of CSP $(V,D,C)$ is a valuation $X$ that satisfies each constraint $c \in C$. We denote by $X(v)$ the value of $v$ in solution $X$.

\paragraph{Dominance relation.}
A dominance relation over CSP solutions expresses when one solution dominates or is equivalent to another one. A simple example is that of a constrained optimization problem, where 
the optimization function defines a total (pre)order on the solutions. We follow the optimisation convention that smaller is better.

We define a dominance relation $\preceq$ as a preorder over the set of solutions of a CSP $P$. A preorder is a reflexive ($a \preceq a$) and transitive ($a \preceq b \wedge b \preceq c \rightarrow a \preceq c$) relation. It can be thought of as a partial order over equivalence classes.
In other words, given two solutions either one dominates the other, or they are equivalent, or they are incomparable. The use of (reflexive) preorders instead of (irreflexive) partial orders allows us to discriminate between incomparable and equivalent solutions. This is also a key difference between the problem of finding all dominant/preferred solutions and all non-dominated solutions: neither of two equivalent solutions is strictly dominant, while both are non-dominant; incomparable solutions are also non-dominant.

More formally, we define the equivalence relation $X \sim Y \Leftrightarrow X \preceq Y \wedge X \succeq Y$; and the negations $\nsim$ and $\npreceq$.
Now, let $\sols$ be the set of all solutions of a CSP, and $\preceq$ a dominance relation. 
We identify three possible properties for sets $A \subseteq \sols$:
\begin{description} 
  \item[complete:] 
  every solution in $\sols$ is dominated by or equivalent to a solution in $A$: $\forall X \in \sols, \exists Y \in A: Y \preceq X$.
  
  \item[domination-free:] 
  the solutions in $A$ are not dominated by any other in $A$, except equivalent ones: 
  $\forall X,Y \in A: Y \npreceq X \vee X \sim Y$. Equivalently, no $X \in A$ is strictly dominated: $\forall X \in A, \nexists Y \in A: Y \preceq X \wedge X \nsim Y$.
  
  \item[equivalence-free:] 
  no two solutions in $A$ are equivalent to each other: $\forall X,Y \in A: X \nsim Y$. In preference terms, they are \textit{indifferent} to each other.
\end{description}

The set of complete and domination-free solutions is unique and defined as follows: $\{ X \in \sols \mid \forall Y \in \sols, Y \npreceq X \vee X \sim Y \}$. In the multi-objective optimization literature~\cite{Ehrgo00b}, this set is known as the Pareto frontier or the \textit{efficient} set. In that context it is often written in the form $\{ X \in \sols \mid \nexists Y \in \sols: Y \preceq X \wedge X \nsim Y \}$.

In practice though, one is typically just interested in a complete and domination-free set that is also \textit{equivalence-free}: $A \subseteq \{ X \in \sols \mid \forall Y \in \sols, Y \npreceq X \vee X \sim Y \}$ with $\forall X,Y \in A: X \nsim Y$.
This set is not unique (it contains one arbitrary solution per equivalence class). 

\section{Constraint Dominance Problems}
We now show how a wide range of problems that are not captured by the classical CSP framework can be expressed with a dominance relation over a CSP. Generic solving methods are discussed in the next section.

A Constraint Optimization Problem (COP) is typically defined as a tuple
$(V,D,C,f)$ where $(V,D,C)$ is a CSP as defined before and $f$ is a function
over a valuation of the variables $V$. A solution to a COP is a solution to the corresponding CSP that
minimizes the function $f$. One is typically interested in finding one such
optimal solution, that is, one solution $V^*$ of $(V,D,C)$ for which $\nexists V': f(V') < f(V^*)$. 

In line with COPs we define a \textbf{Constraint Dominance Problem} (CDP) as a tuple
$(V,D,C,\preceq)$ where $(V,D,C)$ is a CSP and $\preceq$
a dominance relation. Two types of queries exist for this problem (with or without equivalent solutions). We call the \emph{\textbf{full} solution to the CDP} the set of complete and domination-free solutions of the CSP. A (non-unique) complete and domination-free set that is also equivalence free is simply called a \emph{solution to the CDP}.

\newcommand{\mypara}[1]{\item[#1]}
\begin{description}

\mypara{Optimization.} Given a COP $(V,D,C,f)$, let $\preceq_f$ be the total order corresponding to $f$: $X \preceq_f Y \Leftrightarrow f(X) \leq f(Y)$.
Then, finding a solution to the COP corresponds to finding a solution to the CDP $(V,D,C,\preceq_f)$.

\mypara{Lexicographic optimization.} Let $(V,D,C)$ be a CSP and $F$ a set of functions $\{f_1, \ldots, f_n\}$. The goal is to find a solution that lexicographically minimizes the functions. Given the preorder $\preceq_{lex_F}: X \preceq_{lex_F} Y \Leftrightarrow f_1(X) < f_1(Y) \vee (f_1(X) = f_1(Y) \wedge f_2(X) < f_2(Y) \vee ( \ldots  \wedge f_n(X) \leq f_n(Y) ))$. A solution to the CDP $(V,D,C,\preceq_{lex_F})$ is a lexicographically optimal solution.

\mypara{Multi-objective optimization.} 
The goal in multi-objective optimization is to find all Pareto optimal (non-dominated) solutions given a set of functions. Let $\preceq_{F}$ be the following preorder: $X \preceq_F Y \Leftrightarrow \forall_i f_i(X) \leq f_i(Y)$.
\begin{lemma}
The full solution to the CDP $(V,D,C,\preceq_F)$ corresponds to the set of all Pareto optimal solutions.
\end{lemma}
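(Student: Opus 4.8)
The plan is to show that the two set-builder descriptions coincide. By the definition just above, the full solution to the CDP is the unique complete and domination-free set, which may be written $\{ X \in \sols \mid \nexists Y \in \sols: Y \preceq_F X \wedge X \nsim Y \}$, while a solution $X$ is \emph{Pareto optimal} exactly when no other solution is at least as good on every objective and strictly better on at least one, i.e.\ when it lies in
\[
  \{ X \in \sols \mid \nexists Y \in \sols: (\forall i:\ f_i(Y) \leq f_i(X)) \wedge (\exists j:\ f_j(Y) < f_j(X)) \}.
\]
It therefore suffices to prove that, for each fixed $X$, a dominating witness $Y$ exists in the first sense if and only if one exists in the second sense.

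First I would unfold $\preceq_F$ and the derived equivalence $\sim$. From $X \preceq_F Y \Leftrightarrow \forall i:\ f_i(X) \leq f_i(Y)$ I obtain $X \sim Y \Leftrightarrow \forall i:\ f_i(X) = f_i(Y)$, and hence $X \nsim Y \Leftrightarrow \exists j:\ f_j(X) \neq f_j(Y)$. Substituting these, the inner condition $Y \preceq_F X \wedge X \nsim Y$ becomes $(\forall i:\ f_i(Y) \leq f_i(X)) \wedge (\exists j:\ f_j(Y) \neq f_j(X))$.

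The only real content of the argument, and the step I would treat most carefully, is matching this disequality form to the strict-inequality form that defines Pareto dominance. This is immediate once one observes that under the standing conjunct $f_j(Y) \leq f_j(X)$ the disequality $f_j(Y) \neq f_j(X)$ is the same as the strict inequality $f_j(Y) < f_j(X)$. Consequently the two inner conditions are logically equivalent for every candidate $Y$, the two negated existentials define the same subset of $\sols$, and the full solution to $(V,D,C,\preceq_F)$ is precisely the set of Pareto optimal solutions. There is no genuine obstacle beyond this bookkeeping; the result is a direct consequence of unfolding the preorder $\preceq_F$ into its component inequalities, with the subtle point being only the interplay between the reflexive-preorder formulation (via $\nsim$) and the classical strict-dominance definition.
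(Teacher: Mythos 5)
Your proof is correct and follows essentially the same route as the paper's: unfold $\preceq_F$ and the induced equivalence $\sim$ into component-wise (in)equalities, then observe that under the standing conjunct $\forall i: f_i(Y) \leq f_i(X)$ the disequality $\exists j: f_j(Y) \neq f_j(X)$ is equivalent to the strict inequality $\exists j: f_j(Y) < f_j(X)$, which recovers the classical definition of Pareto optimality. Incidentally, your final strict inequality has the correct orientation, whereas the paper's last displayed line writes $f_j(X) < f_j(Y)$, an apparent typo since that condition contradicts the universal conjunct.
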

\begin{proof}
\begin{align*}
				   & \{ X \in S \mid \nexists Y \in S: Y \preceq_F X \wedge X \nsim_F Y \} \\
  \leftrightarrow~ & \{ X \in S \mid \nexists Y \in S: \forall_i f_i(Y) \leq f_i(X) \wedge \neg (\forall_j f_j(X) = f_j(Y)) \} \\
  \leftrightarrow~ & \{ X \in S \mid \nexists Y \in S: \forall_i f_i(Y) \leq f_i(X) \wedge \exists_j f_j(X) \neq f_j(Y) \} \\
  \leftrightarrow~ & \{ X \in S \mid \nexists Y \in S: \forall_i f_i(Y) \leq f_i(X) \wedge \exists_j f_j(X) < f_j(Y) \} \label{eq:mo} 
\end{align*}
which is the classical definition of multi-objective
optimization~\cite{Ehrgo00b}.
\hfill $\Box$
\end{proof}

\mypara{${\cal X}$-minimal models.} Let ${\cal X} \subseteq V$ be Boolean (0/1) variables of a CSP $(V,C,D)$. Given a solution $X$, let $pos_{\cal X}(X) = \{ v
\in {\cal X} | X(v) = 1 \}$ be the variables of ${\cal X}$ assigned to $1$ in solution $X$. An ${\cal X}$-minimal model is a solution to the CSP such that there is no solution $Y$ with $pos_{\cal X}(Y) \subset pos_{\cal X}(X)$.
The set of all $\cal X$-minimal models is: $\{
X \in S | \nexists Y \in S: pos_{\cal X}(Y) \subset pos_{\cal X}(X) \}$.

We can define the preorder $\preceq_{\cal X}$ as: $X \preceq_{\cal X} Y \Leftrightarrow \forall v
\in {\cal X}: X(v) \leq Y(v)$, that is, for each variable of ${\cal X}$ an assignment to $\mathit{false}$
(domain value $0$) is preferred over $\mathit{true}$ (domain value $1$).

\begin{lemma}
The full solution to the CDP $(V,D,C,\preceq_{\cal X})$ corresponds to the
set of ${\cal X}$-minimal models.
\end{lemma}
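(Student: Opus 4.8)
The plan is to unwind both set definitions and show that they describe the same membership condition, in the same spirit as the proof of the preceding multi-objective lemma. I would start from the Pareto-frontier form of the full solution given earlier in the excerpt, namely $\{ X \in \sols \mid \nexists Y \in \sols: Y \preceq X \wedge X \nsim Y \}$ instantiated with $\preceq_{\cal X}$, and show term by term that its defining predicate coincides with $\nexists Y \in \sols: pos_{\cal X}(Y) \subset pos_{\cal X}(X)$, which is exactly the stated set of ${\cal X}$-minimal models.

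The central observation to establish first is that, because the variables in ${\cal X}$ are Boolean, the pointwise value order collapses to set inclusion of the positive sets. Concretely, for $v \in {\cal X}$ the inequality $X(v) \leq Y(v)$ says exactly that $v \in pos_{\cal X}(X)$ implies $v \in pos_{\cal X}(Y)$; quantifying over all $v \in {\cal X}$ then gives $X \preceq_{\cal X} Y \Leftrightarrow pos_{\cal X}(X) \subseteq pos_{\cal X}(Y)$. Applying this in both directions yields $X \sim_{\cal X} Y \Leftrightarrow (pos_{\cal X}(X) \subseteq pos_{\cal X}(Y) \wedge pos_{\cal X}(Y) \subseteq pos_{\cal X}(X)) \Leftrightarrow pos_{\cal X}(X) = pos_{\cal X}(Y)$, and dually $X \nsim_{\cal X} Y \Leftrightarrow pos_{\cal X}(X) \neq pos_{\cal X}(Y)$.

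With these equivalences in hand, I would rewrite the dominance condition: $Y \preceq_{\cal X} X \wedge X \nsim_{\cal X} Y$ becomes $pos_{\cal X}(Y) \subseteq pos_{\cal X}(X) \wedge pos_{\cal X}(X) \neq pos_{\cal X}(Y)$, and a subset that is not an equality is precisely a proper subset, i.e.\ $pos_{\cal X}(Y) \subset pos_{\cal X}(X)$. Substituting this back, the full-solution set becomes $\{ X \in \sols \mid \nexists Y \in \sols: pos_{\cal X}(Y) \subset pos_{\cal X}(X) \}$, which is the set of ${\cal X}$-minimal models, completing the argument.

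This is essentially a chain of definitional equivalences, so there is no genuinely hard step; the only place that demands care is the Boolean-to-subset translation and, in particular, making sure the strictness coming from $\nsim_{\cal X}$ lines up exactly with proper inclusion $\subset$ rather than $\subseteq$. I would therefore spell out the $X \nsim_{\cal X} Y \Leftrightarrow pos_{\cal X}(X) \neq pos_{\cal X}(Y)$ equivalence explicitly, since that is the one point where a reflexivity slip could turn a proper subset into an improper one and silently change the claimed set.
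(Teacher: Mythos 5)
Your proposal is correct and follows essentially the same route as the paper's own proof: both unfold the full-solution (Pareto-frontier) definition under $\preceq_{\cal X}$, translate the Boolean pointwise order into inclusion of the $pos_{\cal X}$ sets (with $\nsim_{\cal X}$ becoming inequality of those sets), and observe that inclusion plus inequality is exactly proper inclusion, yielding the ${\cal X}$-minimal models. The only cosmetic difference is that the paper performs the Boolean-to-subset translation variable by variable via $pos(Y \cap \{v\}) \subseteq pos(X \cap \{v\})$ before aggregating, whereas you state the equivalence $X \preceq_{\cal X} Y \Leftrightarrow pos_{\cal X}(X) \subseteq pos_{\cal X}(Y)$ directly.
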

\begin{proof}
Using the same rewriting as for multi-objective
optimization, we obtain that the set of non-dominated solutions is:
\begin{align*}
				   & \{ X \in S \mid \nexists Y \in S: Y
  \preceq_{\cal X} X \wedge X \nsim_{\cal X} Y \} \\
  \leftrightarrow~ & \{ X \in S \mid \nexists Y \in S: \forall_{v \in {\cal X}} Y(v) \leq X(v) \wedge \nonumber \exists_{v \in {\cal X}} X(v) \neq Y(v) \} \\ 
  \leftrightarrow~ & \{ X \in S \mid \nexists Y \in S: \forall_{v \in {\cal
      X}} pos(Y \cap \{v\}) \subseteq pos(X \cap \{v\}) \nonumber \\ 
&\quad \quad \quad \quad \quad \quad \quad \wedge \exists_{v \in {\cal X}} pos(X \cap \{v\}) \neq pos(Y \cap \{v\}) \} \\
  \leftrightarrow~ & \{ X \in S \mid \nexists Y \in S: pos_{\cal X}(Y) \subseteq pos_{\cal X}(X) \wedge pos_{\cal X}(X) \neq pos_{\cal X}(Y) \}
\end{align*}
which equals $\{ X \in S | \nexists Y \in S: pos_{\cal X}(Y) \subset
pos_{\cal X}(X) \}$. \hfill $\Box$
\end{proof}
The concept of minimal models can be extended to non-Boolean CSPs by defining a total order $\leq_v$ over the possible values of each variable $v \in V$, in line with $X(v) \leq Y(v)$.

\mypara{Weighted (partial) MaxCSP.} Given a CSP $(V,D,C)$ and a function $g : C \rightarrow \mathbb{R}$ that represents the weight of a constraint. Let $X$ be a valuation which need not be a solution to the CSP. The total weight of $X$ is the sum of the weights of the constraints that are satisfied by $X$: $w(X) = \sum_{c \in C, c(X) = true} g(c)$. The goal is to find an assignment to $V$ that maximizes this weight.
As our dominance relation is over solutions of a CSP, we define a new CSP $(V', D', C')$ as follows: a set of $|C|$ new Boolean
variables $B$ is added to $V$: $V' = V \cup B$, and each (soft) constraint is
replaced by a reified version: $C' = \{ B_c \rightarrow c | \forall c
\in C\}$. Let preorder $\preceq_g$ be $X \preceq_g Y \Leftrightarrow w'(g,X) \geq w'(g,Y)$, with $w'(g,X) = \sum_c g(c)*X(B_c)$, then, the weighted MaxCSP problem
$(V,D,C)$ given $g$ is equivalent to the CDP $(V',D',C',\preceq_{g})$. Because $\preceq_{g}$ is a total order, this is equivalent to a COP over $w'$. 

\mypara{Valued CSPs.} A \textit{valued} CSP with
annotated constraints~\cite{DBLP:conf/ijcai/SchiexFV95} has 
a \textit{valuation structure} $(E, \oplus, \leq_v,
\bot, \top)$. Each constraint is mapped to a problem specific value in $E$. The values are aggregated using operator $\oplus$, and $\leq_v$ defines a total over
$E$. Because of the total order, a similar encoding
to that for weighted MaxCSP can be obtained.

\ignore{
\mypara{semiring CSPs} ... uses a c-semiring as the set of levels instead of a
valuation structure ... where operator $\trianglelefteq$ defines partial order.
}

\mypara{Maximally satisfiable subsets.} 
A maximally satisfiable subset $M \subseteq C$ of $(V,D,C)$ is such that $(V,D,M)$ is satisfiable and adding any other constraint leads to unsatisfiability: $\forall c \in C \setminus M: (V,D, M \cup {c})$ is unsatisfiable. Dually, we call $C \setminus M$ a minimal correction subset.

Applying the same transformation of $(V,D,C)$ to $(V',D',C')$ as for MaxCSP problems, the goal is to find solutions $X$ with set $pos_B(X) = \{c \in C | X(B_c) = 1\}$ of active constraints such that no additional constraints in $C$ can be added to it: $\nexists Y \in S: pos_B(Y) \supset pos_B(X)$. This corresponds to finding all $B$-\textit{maximal} models. In line with minimal models, we can define the preorder $X \preceq_{MSS} Y \Leftrightarrow \forall c \in C: X(B_c) \geq Y(B_c)$ and corresponding CDP $(V',D',C',\preceq_{MSS})$.

\begin{figure}[t]
\centering
  \footnotesize
  \newcommand{\xyo}[1]{*+++[o][F-]{#1}}
  \newcommand{\xyd}[1]{*+[F=]{#1}} 
  \newcommand{\xydd}[1]{*+[F.]{#1}}
$$
\begin{array}{l}
\xymatrix@=45pt{
  \xyo{V_1} \ar[r] & \xyo{V_2} \ar[r]  & \xyo{V_3}
  }
\\[10px]
\begin{array}{lll}
\begin{array}{|c|}\hline V_1 \\ \hline 1 < 0 \\ \hline \end{array} &
\quad \quad \quad
\begin{array}{|c|c|}
  \hline V_1 & V_2 \\
  \hline 0 & 1 < 0 < 2 \\
         1 & 2 < 1 < 0 \\ \hline
\end{array} &
\quad \quad
\begin{array}{|c|c|}
  \hline V_2 & V_3 \\
  \hline 0 & 0 < 1 < 2 \\
         1 & 0 < 2 < 1 \\
         2 & 1 < 2 < 0 \\ \hline
\end{array}
\end{array}
\end{array}
$$
\vspace{-1em}
\caption{\small CP-net example over 3 variables.\label{fig:cpnet}}
\vspace{-1em}
\end{figure}

\mypara{CP-nets.}
A CP-net is an acyclic directed graph over a set of variables
$V$~\cite{DBLP:journals/jair/BoutilierBDHP04}. Each node in a CP-net
corresponds to a variable $V_i$ and has a conditional preference table
$\mathit{CPT}(V_i)$. A CPT associates with each possible partial valuation
$a$ of the \textit{parent} variables in the graph, a strict total order
$<_i^{a}$ over the values of $V_i$ (consistent with the rest of this paper,
$x < y$ means $x$ is preferred over $y$). Figure~\ref{fig:cpnet} shows an example.

A CP-net induces a set of \textit{preference rankings} that are consistent with all CPTs, where a preference ranking is a total ordering over all complete valuations of $V$. Traditionally, dominance in a CP-net is defined in terms of its preference rankings: $o$ dominates $o'$ if in all of the preference rankings of the CP-net $o$ is ordered before $o'$. Even for binary-valued CP-nets, the complexity of a dominance check for an arbitrary network is NP-hard \cite{DBLP:journals/jair/BoutilierBDHP04}.

An easier to compute query is the \textit{ordering query}~\cite{DBLP:journals/jair/BoutilierBDHP04}: given two valuations $o$ and $o'$, is there a CPT where for all ancestor variables $o$ and $o'$ have the same value and $o$ is preferred to $o'$ according to the CPT: $\exists v \in V: (\forall w \in Ancestor_{\cal N}(v), o(w) = o'(w) \wedge o <_{\mathit{CPT}_{\cal N}(v)} o')$? If so, there must exist a preference ordering where $o < o'$ and hence this implies that $o'$ does not dominate $o$. It only \textit{implies} non-dominance though (sufficient but not necessary), meaning that there may not exists such a CPT and yet $o'$ does not dominate $o$. This is because only CPTs are checked for which all \textbf{ancestors} have the same value in $o$ and $o'$; hence there may be a CPT with $Ancestor_{\cal N}(v) \neq Parent_{\cal N}(v)$ that induces preference rankings in which $o < o'$ but the CPT is not checked because of the ancestor condition.

We hence propose the following \textit{weaker} form of CP-net dominance:
\begin{definition}\label{def:cpnet_dom}
Given CP-net $\cal N$, a valuation $o$ \textit{locally} dominates $o'$ iff for all CPTs: when its \textbf{parents} have equal value in $o$ and $o'$ but the CPT variable does not, then $o$ must be preferred to $o'$ by the CPT;
\begin{eqnarray}
o \prec_{\cal N} o' \Leftrightarrow \forall v \in V: (&&(\forall w \in Parents_{\cal N}(v), o(w) = o'(w) \wedge o(v) \neq o'(v)) \nonumber \\
&&\rightarrow o <_{\mathit{CPT}_{\cal N}(v)} o'~)
\end{eqnarray}
\end{definition}

Note that there is always at least one $v$ with corresponding CPT active, because a CP-net is acyclic and hence there is at least one node with an empty parent set.

\begin{lemma}
\textit{Local} dominance in a CP-net is a necessary but not a sufficient condition for traditional (\textit{preference ranking}-based) dominance.
\end{lemma}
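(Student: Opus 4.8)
The plan is to prove the two halves of the claim separately: \emph{necessity}, i.e.\ that $o$ being preferred to $o'$ in every preference ranking consistent with $\cal N$ implies $o \prec_{\cal N} o'$; and the \emph{failure of sufficiency}, via a concrete CP-net where $\prec_{\cal N}$ holds but ranking-based dominance does not.

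For necessity I would work through the standard improving-flip characterisation of CP-net dominance~\cite{DBLP:journals/jair/BoutilierBDHP04}: $o$ is preferred to $o'$ in every consistent ranking exactly when there is a sequence of improving single-variable flips leading from $o'$ up to $o$, each flip replacing a variable's value by a more-preferred one according to the CPT row selected by the \emph{current} values of its parents. I would then argue the contrapositive. Assuming $\neg(o \prec_{\cal N} o')$, Definition~\ref{def:cpnet_dom} hands me a variable $v$ whose parents take equal values in $o$ and $o'$, with $o(v) \neq o'(v)$, while the active CPT row ranks $o'(v)$ strictly above $o(v)$. The ceteris-paribus reading of that row says that, with $v$'s parents pinned to their common value and every other variable held at its value in $o$, moving $v$ from $o(v)$ to $o'(v)$ is an improving flip, so the neighbour $o[v := o'(v)]$ is preferred to $o$ in every consistent ranking. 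The aim is to parlay this into a consistent ranking in which $o'$ itself is ordered above $o$, witnessing that $o$ does not dominate $o'$.

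The main obstacle lies precisely in that final bridging step, and it is where the parents-only formulation of Definition~\ref{def:cpnet_dom} bites. The ceteris-paribus flip only relates $o$ to its neighbour $o[v := o'(v)]$, whereas $o$ and $o'$ may differ at many variables, so the violation at $v$ must be carried along a flip path without being cancelled by flips elsewhere. The delicate point is that the CPT row governing $v$ is chosen by the values of $v$'s \emph{parents}, and along a flip path those parents can themselves be flipped (driven by \emph{their} ancestors) so that $v$ is flipped under a different row than the one witnessing the violation. The clean case is the topologically earliest variable on which $o$ and $o'$ disagree: there not only the parents but all ancestors agree, the selected row is frozen along any flip path, and monotonicity of improving flips forces $o(v)$ to be the preferred value. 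I would therefore anchor the necessity argument on such a variable and reconcile the parents-versus-ancestors discrepancy with care; this is the crux I expect to spend the most effort on.

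For the failure of sufficiency I would exhibit a minimal separating instance: the two-variable chain $V_1 \to V_2$ in which $V_1$ prefers value $0$ to $1$, and $V_2$ prefers value $1$ to $0$ \emph{regardless} of $V_1$. Taking $o = (V_1 = 0,\, V_2 = 0)$ and $o' = (V_1 = 1,\, V_2 = 1)$, the condition $o \prec_{\cal N} o'$ holds: the $V_1$-clause is satisfied (its preferred value $0$ is $o$'s value) while the $V_2$-clause is vacuous (its parent $V_1$ differs in $o$ and $o'$). Yet $o$ is worse than $o'$ on $V_2$, and in the ranking semantics the two outcomes are incomparable, both lying strictly between the best outcome $(V_1 = 0,\, V_2 = 1)$ and the worst $(V_1 = 1,\, V_2 = 0)$; hence some consistent ranking places $o'$ above $o$, and $o$ does not dominate $o'$. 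This shows $\prec_{\cal N}$ is strictly weaker than ranking-based dominance, completing the ``not sufficient'' half.
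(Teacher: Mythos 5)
Your ``not sufficient'' half is correct and complete: the two-variable chain with $o=(0,0)$, $o'=(1,1)$ is a genuine separating instance ($o \prec_{\cal N} o'$ holds, while under the flip semantics both outcomes sit incomparably between $(0,1)$ and $(1,0)$), and it is in fact more explicit than anything the paper gives. The genuine gap is the necessity half, which you leave open as ``the crux'' --- and that crux cannot be closed, because the obstacle you identified (parents being flipped away and back along a flip path, so that $v$ is flipped under a different CPT row than the one selected at the endpoints) is not a technical nuisance but a refutation of necessity for the parents-based Definition~\ref{def:cpnet_dom}. Concretely, take the chain $A \to B \to C$ with CPTs: $a_0$ preferred to $a_1$; under $A=a_0$, $b_1$ preferred to $b_2$, under $A=a_1$, $b_2$ preferred to $b_1$; under $B=b_1$, $c_0$ preferred to $c_1$, under $B=b_2$, $c_1$ preferred to $c_0$. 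Let $o'=(a_1,b_1,c_0)$ and $o=(a_0,b_1,c_1)$. The sequence $(a_1,b_1,c_0) \to (a_1,b_2,c_0) \to (a_1,b_2,c_1) \to (a_0,b_2,c_1) \to (a_0,b_1,c_1)$ is an improving flip sequence (each flip is sanctioned by the row selected by the \emph{current} parent values), so $o$ traditionally dominates $o'$; yet at $v=C$ the parent $B$ equals $b_1$ in both outcomes, $o(C)\neq o'(C)$, and the $B=b_1$ row prefers $o'(C)=c_0$, so $o \nprec_{\cal N} o'$. Your proposed anchor --- the topologically earliest differing variable --- proves necessity only for an \emph{ancestor}-based condition (the one the ordering query uses, where the selected row is indeed frozen along any flip path); Definition~\ref{def:cpnet_dom} additionally constrains variables like $C$ whose parents agree while some ancestor does not, and those are exactly the ones that break.

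Comparing with the paper: its proof of necessity is the one-line assertion that ``all preference rankings have to agree with all CPTs,'' which silently assumes the endpoint row is the one relevant along every flip path --- precisely the step you declined to take on faith. Tellingly, the second half of the paper's own proof then concedes that when $o \nprec_{\cal N} o'$ ``it may still be the case that $o$ dominates $o'$'' because ``no preference ranking actually uses this entry'' of the CPT; that concession is literally the negation of necessity (instantiated by the example above), so the paper's proof contradicts itself rather than establishing the lemma. Your hesitation was therefore the correct mathematical judgement, but as a proof the proposal is incomplete, and no completion exists for the statement as written. A sound write-up would either restate Definition~\ref{def:cpnet_dom} with ancestors in place of parents (after which your minimal-differing-variable argument goes through), or replace the pairwise claim by the set-level fact the paper's application actually needs: any outcome that is locally dominated by some outcome is traditionally dominated by \emph{some} outcome (its single-flip neighbour at a topologically minimal differing variable), though not necessarily by its local dominator.
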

\begin{proof}
All preference rankings have to agree with all CPTs. Hence, if $o$ dominates $o'$ then for all applicable (parent variables equal, current variable not) CPTs: $o <_{\mathit{CPT}_{\cal N}(v)} o'$, so local dominance must necessarily be true. However, if $o \nprec_{\cal N} o'$ then by negation there must exists a node $v$ with applicable CPT for which $o \nless_{\mathit{CPT}_{\cal N}(v)} o'$ and hence because of $o(v) \neq o(v')$: $o' <_{\mathit{CPT}_{\cal N}(v)} o$. However, it may still be the case that $o$ dominates $o'$, because of an interaction between the grandparent variables and CPTs such that no preference ranking actually uses this entry of $\mathit{CPT}_{\cal N}(v)$. This cannot be observed from the \textit{local} CPTs directly, but must be verified through a (NP-hard) dominance check over the preference rankings.
\end{proof}

\begin{definition}
Because each CPT in a CP-net encodes strict relations, two valuations can only be equivalent if they are identical: $Y \sim_{\cal N} X \equiv Y = X$.
We hence define $Y \preceq_{\cal N} X \equiv Y \prec_{\cal N} X \vee Y = X$.
\end{definition}

Local dominance can be checked in time linear in the number of CPTs in the CP-net. We can hence realistically use it to enumerate all non-locally-dominated solutions of a CP-net by solving CDP $(V,D,C,\preceq_{\cal N})$. The resulting solution will be an over-approximation of the actual set of traditional non-dominated solutions. Should the application demand it, one could still filter the resulting set by post-processing with the NP-hard traditional dominance check.

\ignore{
\begin{figure}
\centering
\includegraphics[width=0.5\linewidth]{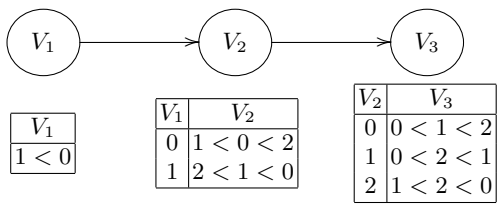}
\caption{CP-net example over 3 variables.\label{fig:cpnet}}
\end{figure}
}

\mypara{Domain-specific dominance relations.} We showcase the need for
domain-specific dominance relations in data mining. Increasingly, constraint
programming is used for data mining problems such as searching for
\textit{patterns} that appear \textit{frequently} in a
database~\cite{Guns2015}. A pattern can be a set of items, a sequence or
another structure such as a graph~\cite{cp4im_struct}. A pattern is
\textit{frequent} if it is a subpattern of sufficiently many objects in the
database. The problem is typically encoded as a set of constraints that
define the pattern type, and that define when a pattern is frequent. One
solution to this CSP is then one frequent pattern.

However, there are a number of pattern mining settings that do not fit the
classical CSP framework, most notably \textit{closed} and \textit{maximal}
patterns, \textit{relevant} patterns and \textit{skyline}
patterns.
Using the concept of dominance though, these problems can be modeled declaratively and combined with arbitrary constraints~\cite{cp4im_domin}.

For example, a frequent pattern is \textit{maximal} if there is no other
frequent pattern that is a \textit{superpattern} of this pattern. Let
$\sqsubseteq$ represent the subpattern relation, e.g. subset, subsequence or
subgraph relation depending on the pattern type. The general dominance
relation for maximal patterns is: $X \preceq_{maximal} Y \equiv X
\sqsupseteq Y$. Let $S$ be the set of all frequent patterns, then the full
solutions is $\{ X \in S \mid \nexists Y \in S: Y \sqsupseteq X \wedge X
\nsim Y \} \equiv \{ X \in S \mid \nexists Y \in S: Y \sqsupset X \}$, the
set of all \textit{maximally} large patterns such that no other frequent
pattern is a superpattern of it. For patterns represented by a set $I$, the
subpattern relation is the subset relation over $I$, hence; $\{ X \in S \mid
\nexists Y \in S: pos_{I}(Y) \supset pos_{I}(X) \}$.
Note that this is
equivalent to an ${\cal X}$-maximal model with ${\cal X} = I$.

\textit{Closed frequent} patterns have the weaker condition that there should not be a superpattern with the same frequency: 
dominance relation $X \preceq_{closed} Y \equiv X \sqsupseteq Y \wedge freq(X) = freq(Y)$. 
The resulting full solution set is the set of all frequent patterns for which no superpattern exists that has the same frequency: $\{ X \in S \mid \nexists Y \in S: Y \sqsupset X \wedge freq(Y) = freq(X) \}$, this is true for all pattern types.
An \textit{algebra} inspired by databases is used in \cite{cp4im_domin} to
show how a number of other pattern mining problems can be expressed as a
combination of a constraint algebra and a dominance algebra. The framework
we propose here focuses on one CSP and one dominance relation, which is
sufficient for most settings.

\end{description}

\section{Search and dominance nogoods} \label{sec:domnogood}
The main task that we consider is to find a solution to the constraint dominance problem $(V,D,C,\preceq)$, that is, a complete, domination-free and optionally equivalence-free set of solutions. Our methodology is to solve the problem through a chain of constraint satisfaction problems, similar in spirit to \cite{DiRosa:2010:SSP:1842832.1842846}.

Let ${\cal O}(V,D,C)$ be an oracle that returns a satisfying solution to the CSP $(V,D,C)$ or fails if no such solution exists. 

\vpar
\paragraph{Complete.} the following algorithm returns a complete solution to a CSP $(V,D,C)$ with dominance relation $\preceq$, using oracle ${\cal O}$:

\begin{algorithm}
\caption{search($V,D,C,\preceq, {\cal O}$): 
  \label{alg:dom}}
\begin{algorithmic}[1]
\State{$A:= \emptyset$}
\While{$S:={\cal O}(V,D,C)$ \label{alg:dom:loop}}
\State{$A:= A \cup \{S\}$}
\State{$C:= C \cup \{ S \npreceq V \vee S \sim V\}$ \label{alg:dom:cons}}
\EndWhile
\State{{\bf return} $A$}
\end{algorithmic}%
\end{algorithm}%
Note how on line~\ref{alg:dom:cons}, a constraint is added over variables $V$ such that any (future) assignment to these variables may not be strictly dominated by the found solution $S$. As the constraint set $C$ is monotonically increasing, the oracle can be \textit{incremental} in that it can continue its search from where its last solution was found; the result is a branch-and-bound style algorithm where the bound is represented by $S \npreceq V \vee S \sim V$.

Let $\langle S_1, S_2, \ldots, S_n \rangle$ be the sequence of solutions as found by the oracle in Algorithm~\ref{alg:dom}. 
\begin{theorem}
  The set $A$ returned by Algorithm~\ref{alg:dom} is a \textit{complete} set: $\forall X \in (V,D,C),$ $\exists Y \in A: Y \preceq X$.
\end{theorem}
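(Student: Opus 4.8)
The plan is to argue directly from the termination condition of the while loop rather than by an induction over iterations. Write $C_0 = C$, and let $C_i = C_{i-1} \cup \{S_i \npreceq V \vee S_i \sim V\}$ be the constraint set in force after the $i$-th solution has been processed on line~\ref{alg:dom:cons}, so that the final set is $C_n = C \cup \bigcup_{i=1}^{n} \{S_i \npreceq V \vee S_i \sim V\}$. The loop on line~\ref{alg:dom:loop} exits precisely when the oracle call ${\cal O}(V,D,C_n)$ fails, and by the specification of ${\cal O}$ this failure means that $(V,D,C_n)$ has no solution whatsoever.

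Next I would fix an arbitrary solution $X \in \sols$ of the original CSP $(V,D,C)$ and split into two cases according to whether $X$ also satisfies every one of the added constraints. If it did, then $X$ would be a solution of $(V,D,C_n)$, contradicting the fact that this CSP is unsatisfiable at termination. Hence $X$ must violate at least one added constraint, say the one contributed by $S_i$ for some $i \in \{1,\dots,n\}$. Violating $S_i \npreceq V \vee S_i \sim V$ at the valuation $V = X$ means, by negating the disjunction, that $S_i \preceq X$ and $S_i \nsim X$; in particular $S_i \preceq X$. Since every solution found by the oracle is inserted into $A$ on the iteration it is discovered, $S_i \in A$, so $Y := S_i$ witnesses $\exists Y \in A: Y \preceq X$, which is exactly the completeness condition.

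The step I expect to carry the real content is this dichotomy, \emph{either $X$ solves the final CSP or it is excluded by some dominance nogood}: this is where the correctness of the oracle (failure is equivalent to unsatisfiability) and the precise logical form of the added constraint both get used, and everything else is bookkeeping, including the de~Morgan manipulation that turns a violated nogood into the dominance fact $S_i \preceq X$. One caveat I would flag explicitly is that the statement presupposes the algorithm terminates and returns $A$. Since reflexivity of $\preceq$ gives $S_i \sim S_i$, the nogood generated for $S_i$ does \emph{not} exclude $S_i$ itself, so termination is a genuinely separate matter (guaranteed in practice by an incremental oracle that continues the search and never re-reports an already-found solution); it is not needed for the completeness argument beyond the assumption that the loop has exited.
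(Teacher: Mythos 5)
Your proof is correct, and it takes a genuinely different route from the paper's. The paper argues by induction over the iterations: it maintains the invariant that $\{S_1,\ldots,S_x\} \cup {\cal S}(P_{x+1})$ is a complete set (where $P_{x+1}$ is the CSP after $x$ nogoods have been added), and then concludes at termination from ${\cal S}(P_{n+1}) = \emptyset$. You instead collapse everything into a single case analysis at the moment the loop exits: any solution $X$ of the original CSP either satisfies all added nogoods, contradicting the unsatisfiability of the final CSP, or violates the nogood contributed by some $S_i$, which by de Morgan yields $S_i \preceq X \wedge S_i \nsim X$ and hence the required witness $S_i \in A$. The underlying content is the same in both proofs --- the dichotomy ``satisfies the nogood or is strictly dominated by its generator'' is exactly the step the paper applies once per iteration inside its induction --- but your organization is more direct: it dispenses with formulating the inductive invariant, and it isolates precisely where the oracle's correctness (failure if and only if unsatisfiability) and the logical form of the constraint enter. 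What the paper's version buys in exchange is the intermediate invariant itself, which is informative about the algorithm's anytime behaviour: at every stage the solutions found so far, together with the remaining feasible region, form a complete set, even before termination. Your closing caveat is also well taken and goes beyond what the paper states: since $S_i \sim S_i$ by reflexivity, the nogood $S_i \npreceq V \vee S_i \sim V$ does not exclude $S_i$ itself (unlike the equivalence-free variant $S_i \npreceq V$, which does), so termination genuinely depends on the oracle not re-reporting solutions; both your proof and the paper's implicitly condition on the loop having exited.
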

\begin{proof}
The set is complete: let $P_x$ be the CSP $(V,D,C)$ that yields solution
$S_x$, let ${\cal S}(P_x)$ be the set of \textit{all} solutions of
$P_x$. $P_1$ is the original CSP and hence ${\cal S}(P_1)$ contains all
solutions of the CSP and this set is complete. The subsequent $P_2$ only
forbids solutions that are dominated and not equivalent to $S_1$, hence
$\{S_1\} \cup {\cal S}(P_{2})$ is also complete as any solution is either
dominated by $S_1$ or in ${\cal S}(P_{2})$. By induction, any set
$\{S_1,S_2,\ldots,S_x\} \cup {\cal S}(P_{x+1})$ is complete. Hence, $S =
\{S_1, \ldots, S_n\} = \{S_1, \ldots, S_n\} \cup {\cal S}(P_{n+1})$ is
complete as the stopping criterion on line~\ref{alg:dom:loop} dictates that
${\cal S}(P_{n+1}) = \emptyset$.
\hfill $\Box$
\end{proof}

\vpar
\paragraph{Complete and equivalence-free.} Algorithm~\ref{alg:dom} can easily be modified to return a complete and equivalence-free set. In this case, line~\ref{alg:dom:cons} has to be changed to $C:= C \cup \{ S \npreceq V\}$. This set is equivalence-free as now any solution to ${\cal S}(P_{x+1})$ cannot be equivalent to $S_x$. The set is still complete as only equivalent solutions are additionally removed by the modification, and hence for each solution $X$ in $(V,D,C)$ it will still be the case that $\exists Y \in A: Y \preceq X$.

\ignore{
Side-note, in case of 
\textbf{total preorders}, where no incomparable solutions exists, e.g. $\forall X,Y: X \preceq Y \vee Y \preceq X$, we can derive that $X \npreceq Y \equiv X \succ Y$.
For \textbf{symmetric preorders}, that regard all symmetric solutions as equivalent with no additional order, e.g. $\forall X,Y: X \preceq Y \rightarrow Y \preceq X$ we can derive that $X \npreceq Y \equiv \neg (X \simeq Y)$.
}

\ignore{
This can be enforced by the following dominance constraint:
\begin{align}
  X \preceq Y \rightarrow Y \preceq X
\end{align}
or equivalently, by virtue of the induced equivalence relation $X\backsimeq Y \Leftrightarrow  X \preceq Y \wedge Y \preceq X$:
\begin{align}
 \neg (X \preceq Y) \vee X \backsimeq Y \label{dom_preord}
\end{align}
Alternatively, one can impose a \textbf{strict} version of the dominance constraint that additionally to Eq.~(\ref{dom_preord}) enforces that $\neg (X \backsimeq Y)$; after simplification:
\begin{align}
  \neg (X \preceq Y)
\end{align}
The effect will be that only one element of each equivalence class will be returned (the first one encountered by the CSP search process).

Two special cases are total preorders and symmetric preorders. For \textbf{total preorders}, we can derive, by virtue of totality: $\forall X,Y: X \preceq Y \vee Y \preceq X$, that:
$
(X \preceq Y \vee Y \preceq X) \wedge (\neg (X \preceq Y) \vee X \backsimeq Y)
\Leftrightarrow Y \preceq X \vee X \backsimeq Y
\Leftrightarrow Y \preceq X
$ and hence the more simple dominance constraint:
\begin{align}
  Y \preceq X
\end{align}
or alternatively its strict version
\begin{align}
  Y \prec X
\end{align}
In case of a \textbf{symmetric preorder}, which is symmetric: $\forall X,Y: X \preceq Y \rightarrow Y \preceq X \Leftrightarrow X \backsimeq Y$ we can derive the trivial dominance constraint $true$ or alternatively the strict version:
\begin{align}
  \neg (X \backsimeq Y)
\end{align}
}

\vpar
\paragraph{Domination-free.}
Any solution $S_y$ found 
after $S_x$ cannot be strictly dominated by $S_x$, as this is explicitly forbidden by the added constraint. 
That means that if the oracle enumerates the solutions from most to least preferred according to the preorder (e.g. first assign variables to 1, then to 0 for MaxCSP), then the complete and equivalence-free set is also domination-free. This is the approach used by \cite{DiRosa:2010:SSP:1842832.1842846}.

However, if we assume no order on the solutions found by the oracle, it is possible to find, with $y > x$, an $S_y$ that strictly dominates $S_x$: $S_y \preceq S_x \wedge S_y \nsim S_x$. We can remove these by doing a \textit{backwards pass} over the solutions in which we check for each $S_y, S_x, y > x$ that $S_y \npreceq S_x \vee S_y \sim S_x$ and if not we drop $S_x$.

Note that we reuse oracle ${\cal O}$ for this though it merely has to \textit{check} whether a fixed assignment to the variables satisfies the constraints.

The same procedure can be applied to a complete and equivalence-free set to
make it domination-free; this set is already equivalence-free so the process
will only remove the strictly dominated solutions.

\paragraph{From dominance relation to dominance nogood.}
We refer to the constraint added on line~\ref{alg:dom:cons} of Algorithm 1 as the dominance nogood. It can be derived from the preorder $\preceq$ in a principled way. We will demonstrate this for a number of earlier examples.

Since one is often interested in a complete, domination-free and equivalence-free set, we will demonstrate it for the equivalence-free dominance nogood $S \npreceq V$. We denote the relation representing the dominance nogood as $D(S,V)$. It can often be obtained by negating the logical definition of the preorder $\preceq$. The dominance nogood with equivalence $S \npreceq V \vee S \sim V$ can be obtained following similar methods as well, perhaps more easily in the equivalent form $S \npreceq V \vee V \preceq S$. Recall that $V$ is the set of variables of the CSP and $S$ is a previously found solution.

\ignore{
Let \texttt{R(X,Y)} be the constraint enforcing a relation $X \preceq Y$; for example $X \preceq Y \equiv f(X) \leq f(Y)$ for some function $f()$, then \texttt{R(X,Y) $\equiv$ f(X) $\leq$ f(Y)}.
Let us consider a CSP $P$ as a set of constraints over an implicitly defined set of variables $var(P)$; a constraint $R(X,Y)$ where $X$ and $Y$ are sets of variables such that $vars(X) = vars(Y) = vars(P)$\footnote{does it make sense that vars(X)=vars(Y)?}; and an oracle ${\cal O}(P)$ that, given any CSP $P$, returns one solution to the CSP, unless no such solution exists. This oracle can be used to find a set of complete and minimal solutions over $P$ and $R$ as follows:
}

\begin{description}

\mypara{Optimization.} For dominance relation $X \preceq_f Y \Leftrightarrow f(X) \leq f(Y)$ the dominance nogood is $D(S,V) \Leftrightarrow \neg (f(S) \leq f(V)) \Leftrightarrow f(S) > f(V) \Leftrightarrow f(V) < f(S)$. This guarantees that every new assignment to $V$ will have a smaller score than the previously found solution $S$.

\mypara{Lexicographic optimization.} The dominance relation is $X \preceq_{lex_F} Y \Leftrightarrow f_1(X) < f_1(Y) \vee (f_1(X) = f_1(Y) \wedge (f_2(X) < f_2(Y) \vee ( \ldots \wedge f_n(X) \leq f_n(Y) )))$. The negation is the dominance nogood: $D(S,V) \Leftrightarrow f_1(S) \geq f_1(V) \wedge (f_1(S) \neq f_1(V) \vee (f_2(S) \geq f_2(V) \wedge ( \ldots \vee f_n(S) > f_n(V) )))$. Using the observation that $(A \geq B) \wedge ((A \neq B) \vee Z) \Leftrightarrow (B < A) \vee ((B = A) \wedge Z)$ we obtain $D(S,V) \Leftrightarrow f_1(V) < f_1(S) \vee (f_1(V) = f_1(V) \wedge (f_2(V) < f_2(S) \vee ( \ldots \wedge f_n(V) < f_n(S) )))$. One could also use a \mzninline|lex_less| global constraint.

\mypara{Multi-objective optimization}. The dominance relation $X \preceq_F Y \Leftrightarrow \forall_i f_i(X) \leq f_i(Y)$ has the dominance nogood $D(S,V) \Leftrightarrow \exists_i f_i(S) > f_i(V)$. Recall that as in the previous examples, this is the dominance nogood to obtain the \textit{equivalence-free} set of complete and domination-free solutions.

\mypara{${\cal X}$-minimal models} We have
$X \preceq_{\cal X} Y \Leftrightarrow \forall v \in {\cal X}: X(v) \leq Y(v) \Leftrightarrow pos_{\cal X}(X) \subseteq pos_{\cal X}(Y)$
and dominance nogood $D(S,V) \Leftrightarrow \exists v \in {\cal X}: S(v) > V(v)$.

\ignore{
\mypara{semiring CSPs} ... uses a c-semiring as the set of levels instead of a
valuation structure ... where operator $\trianglelefteq$ defines partial order.
}

\mypara{Minimal correction subsets} We noted earlier that minimal correction subsets is dual to finding the maximal satisfiable subsets, which corresponds to the problem of finding all maximal models of $(V',D',C')$ with $X \preceq_{MSS} Y \Leftrightarrow \forall c \in C: X(B_c) \geq Y(B_c)$. The corresponding dominance nogood is $D(S,V) \Leftrightarrow \exists c \in C: S(B_c) < V(B_c)$.

\mypara{CP-nets} For CP-nets we have $D(S,V) \Leftrightarrow S \neq V \wedge S \nprec_{\cal N} V ~ \Leftrightarrow ~ S \neq V \wedge \exists v: (\forall w \in Parents(v), S(w) = V(w) \wedge S(v) \neq V(w) \wedge V <_{\mathit{CPT}(v)} S)$.

Each line in a CPT is mutually-exclusive, so the relation that $V$ is more preferred to $S$ according to $\mathit{CPT}(V_i)$, $V <_{\mathit{CPT}(v)} S$, can be formalized using a disjunction over all CPT entries. The preference in each entry can be modeled using implications. For example, for the first entry of $\mathit{CPT}(X_3)$ in Figure~\ref{fig:cpnet} as follows: $V_2 = 0 \wedge S_2 = 0 \wedge V_3 \neq S_3 \wedge ( (S_3=2 \rightarrow (V_3=1 \vee V_3=0)) \vee (S_3=1 \rightarrow V_3=0) )$. Indeed, the parents have to take a specific (identical) value, and in that case if $S_3 = 2$ then $V$ is preferred only if $V_3$ takes value $1$ or $0$; if $S_3 =1$ then $V_3$ must be $0$ to be preferred.
Because of the mutual exclusivity and because $S$ is a solution of which we know the values, when posting the dominance nogood, at most one entry per CPT will be part of the logical formula.

\end{description}

\paragraph{Complexity.}
Each time the oracle $\cal O$ is called, it has to solve an NP-hard CSP $(V,D,C)$ in general. Furthermore, there can be an exponential number of non-dominated solutions and hence calls to the oracle: there exists a search order such that the algorithm has to enumerate all
solutions to find all non-dominant ones.  Brafman~\emph{et al}~\cite{kr2010} show that the simpler problem of
computing the \emph{next solution} in a CSP or preference problem is in
general also NP-hard, although there are some special cases (like tree CSPs)
where it can be easier.

Note that since the constraint
system is monotonically increasing, any learned nogoods from previous runs
(from a learning solver~\cite{lcg}, or from restarts~\cite{lecoutre})
are valid, hence each subsequent
call to the solve oracle $\cal O$ can take advantage of them.

\section{Integration in a modeling language}
Any constraint solver that supports incrementally adding constraints and retrieving the next solution can implement Algorithm~\ref{alg:dom}. From the modeling perspective, we describe how \MiniZinc~\cite{minizinc}, a modeling language for CSPs and COPs, can be extended to handle CDPs as new modeling primitive.

Instead of extending \MiniZinc to express dominance relations, we instead add \textbf{syntax for dominance nogoods}, for two reasons: 1) users can explicitly specify either an equivalence-free dominance nogood, or a dominance nogood with equivalence;
2) we found it more intuitive to declare an \emph{invariant} for the search (e.g. minimization as $f(V) < f(S)$ where $S$ is a previously found solution), rather then declaring when a previous solution dominates or is equivalent to a new one (e.g. $f(S) \leq f(V)$).

\MiniZinc has keywords to define variables, constraints, predicates and a minimal search specification.
We add the keyword \lstinline|dominance_nogood| for specifying dominance
nogoods.
Furthermore, the built-in MiniSearch~\cite{Rendl15MiniSearch} function \lstinline|sol(X)| can be used to refer to the value of variable $X$ in a previously found solution. Hence, the dominance nogood for a minimization problem: $D(S,V) \Leftrightarrow f(V)< f(S)$ can be expressed as:\\
\indent \lstinline|dominance_nogood f(V) < f(sol(V));|\\
with $V$ some array of variables of the accompanying CSP.

Here is an example for finding minimal correction subsets 
(note the \lstinline|not| in the output statement, where we convert the maximal satisfiable subset to a minimal correction subset).
\begin{lstlisting}
array [int] of var bool: B;
constraint B[1] -> ...;
dominance_nogood exists(i in index_set(B))(B[i] < sol(B[i]));
output ["MCS:"] ++ ["\(i) " | i in index_set(B)
                              where not fix(B[i])]; 
\end{lstlisting}

\ignore{
\textit{
Optional: syntactic extension to the language for multi-objective?
E.g. \lstinline|solve maximize [f(X), g(X), -h(X)]|?? Would be translated to
dominance expression in pre-processing step.
}
\pjs{We should really do this, and indeed at the same time expand
  \texttt{<}, \texttt{<=}
  to apply to arrays, meaning lexicographic order. Thus replacing
  \texttt{lex\_less} etc. Then solve minimize [f(X), g(X), -h(X)] is
  transparently
  the same as other minimizations!}
\tias{For lex yes, but I was thinking of multi-objective here.}
}


\vpar
\subsection{Solving}
Algorithm~\ref{alg:dom} can be specified in a straightforward way using the recently introduced MiniSearch language~\cite{Rendl15MiniSearch}. MiniSearch is an extension of \MiniZinc with support for specifying meta-level search heuristics. The integration with \MiniZinc allows any existing \FlatZinc solver to be used by MiniSearch, and hence also for solving CDPs.

At the language level, a declaration \lstinline|dominance_nogood e| is simply translated into a predicate declaration \lstinline|predicate post_dng() = e| that posts the dominance nogood. Algorithm~\ref{alg:dom} is then specified as follows in MiniSearch:
\begin{lstlisting}
solve search dominance_search;
function ann: dominance_search() =
  repeat( if next() then
            commit() /\ print() /\ post_dng()
          else break endif );
\end{lstlisting}
$ $

The \lstinline|dominance_search| function repeatedly queries a black-box solver for the next solution (\lstinline|next()|), and if a solution is found, it remembers it (\lstinline|commit()|), prints it, and then posts the dominance nogood before continuing to search for the next solution. In MiniSearch, \lstinline|next()| can either call an external solver process through a file based interface, restarting the search from scratch for each call; or use an incremental C++ API that permits adding constraints during the search.

\ignore{
\paragraph{\bf Specialised primitives} For specific settings such as lexicographic optimisation, multi-objective optimisation or X-minimal models, one could devise new function definitions, e.g. \lstinline|multi_objective([f(X), g(X), h(X)])|, that automatically translate to a dominance nogood over the arguments. Just as with global constraints, this could give solvers the opportunity to indicate whether they have native support for this type of setting.
}

\ignore{
\subsection{Improved solving (old ideas)}
Would be nice to have some more technical contributions, for example:
\begin{itemize}
  \item detect when bi-obj, then use specialized propagator of Schauss if available in solver?
  \item same for multi-obj and quad-tree propagator?
  \item recent work on MCS (in SAT) or MaxCSP work (e.g. \cite{DBLP:conf/cp/DelisleB13})?
  \item how to better maintain the nogoods in the solvers? E.g. do not keep dominated preference nogoods? are minisearch scopes enough (restart the search)?
\end{itemize}
}

\ignore{
\subsection{Generalised objective functions}

Some types of solution dominance lend themselves to a more natural
specification as an objective function, similar to the existing
\lstinline|solve minimize X| syntax in \MiniZinc. In particular, we propose
the following syntactic extensions for lexicographic and multi-objective optimisation:

\begin{lstlisting}
solve minimize lex([f(X), g(X), h(X)]);
\end{lstlisting}

\begin{lstlisting}
solve minimize multi_objective([f(X), g(X), h(X)]);
\end{lstlisting}

\vspace{1mm}
The \MiniZinc compiler will then translate these automatically into the corresponding dominance nogoods. We envision solvers will be able to indicate native support for these objectives, similar to how global constraints can be specified.
}

\section{Experiments}
We evaluate the viability of the framework and the opportunities of modeling constraint dominance problems in a declarative solver-independent language. We do not aim to beat the state-of-the-art on any one specific task, as they typically employ specialized bounds or additional inference mechanisms.
The experiments below evaluate different categories of dominance nogoods and show that they can be handled through the generic framework presented in this paper, including the novel setting of optimizing over CP-nets using generic CSP solvers.

The experiments use MiniSearch with the following solvers: Gecode 4.4.0
(\textsf{gecode}),
or-tools v2015-09 (\textsf{ortools}),
Opturion CPX 1.0.2 (\textsf{optcpx})
and
Chuffed b776ac2 (\textsf{chuffed}).
Gecode and or-tools are classical depth-first search CP solvers, while
Opturion CPX and Chuffed are lazy clause generation solvers~\cite{lcg}.
All solvers are called by MiniSearch as external processes using the standard file-based interface. Gecode additionally supports the direct, incremental C++ API (\textsf{gecode-api}).

\subsection{MaxCSP}
We use benchmarks from the 2008 XCSP competition, MaxCSP \textit{with globals} category. Table \ref{tab:maxcsp} shows a comparison of MiniSearch with different solvers on a selection of instances. For each solver, we compare a model where no variable order is given ('free') with the specification of a most-to-least preferred strategy over the $B$ variables only ('ord', e.g. assign to $1$ before $0$). The latter forces the solvers to first consider all constraints, then all but one (arbitrary) constraint, and so on.

\begin{table}[t]
\caption{MaxCSP runtimes in seconds, --- timed out after 30 min.\label{tab:maxcsp}}
\centering
\scalebox{1}{
\begin{tabular}{l||r|r||r|r||r|r||r|r||r|r}
Instance & \multicolumn{2}{c||}{\sf gecode-api} & \multicolumn{2}{c||}{\sf
  gecode} & \multicolumn{2}{c||}{\sf ortools} &
\multicolumn{2}{c||}{\sf chuffed} & \multicolumn{2}{c}{\sf optcpx}\\
 & free & ord & free & ord & free & ord & free & ord & free & ord\\
\hline
\small \tt cabinet-5570 & --- & 0.9 & --- & --- & 36 & 0.2 & 257 & --- & 3.9 & 0.3\\
\small \tt cabinet-5571 & --- & 0.9 & --- & --- & 36 & 0.2 & 257 & --- & 3.9 & 0.4\\
\small \tt latinSq-dg-3\_all & 0.2 & 0.1 & 0.5 & 0.3 & 0.1 & 0.1 & 0.2 & 0.3 & 0.1 & 0.1\\
\small \tt latinSq-dg-4\_all & 0.6 & 0.9 & 0.8 & 6.8 & 0.5 & 1.3 & 0.5 & 13 & 0.6 & 0.3\\
\small \tt quasigrp4-4 & 46 & --- & --- & --- & 4.5 & --- & 3.8 & 18 & 1.4 & 7.7\\
\small \tt quasigrp5-4 & 0.4 & 1651 & 1158 & --- & 1.1 & --- & 1.6 & 5.4 & 1.6 & 1.3\\
\small \tt q13-1110973670 & 479 & 1.1 & 32 & 0.9 & 540 & 0.7 & 635 & 43 & 11 & 7.5\\
\small \tt q13-1111219348 & 569 & 1.1 & 32 & 1.3 & 385 & 0.9 & 641 & 72 & 8.8 & 7.0\\

\end{tabular}
}
\end{table}

We can see in Table~\ref{tab:maxcsp} that providing the search order often
leads to improved runtimes, but not always (\texttt{quasigrp} for
\textsf{gecode}, \texttt{cabinet} for \textsf{chuffed}).
\textsf{gecode} is slower than the incremental API approach
\textsf{gecode-api}
in case an order is given; when doing free search, the restarts of the
file-based approach seem to improve runtime for \texttt{q13} (and others,
not shown).
The remaining solvers seem to handle this task pretty well, especially
\textsf{optcpx}.
For a rough comparison, in the 2008 competition the \texttt{quasigrp} and
\texttt{latinSq}
instances were also solved within seconds, however runtimes of 600+ seconds
were reported for \texttt{cabinet}
and 40+ seconds for \texttt{q13}.

\subsection{Multi-objective}
We consider traveling saleseman problems where two different costs are given between any two cities, e.g. duration and fuel cost. 
We report on the generation of all complete and equivalence-free solutions. The straightforward \textit{backward pass} needed to make the set domination-free is omitted for reasons of simplicity. The instances are from the Oscar repository~~\cite{oscar}. 

\begin{table}[b]
\centering
\caption{Runtime and number of solutions (forward pass) for
  multi-objective optimization; --- indicates time out after 30 min;
  n.a. that the search
  strategy was not supported\label{tab:mo}}
\scalebox{1}{
\begin{tabular}{l||r|r||r|r||r|r||r|r||r|r}
Instance & \multicolumn{2}{c||}{\sf gecode-api} & \multicolumn{2}{c||}{\sf
  gecode} & \multicolumn{2}{c||}{\sf ortools} & \multicolumn{2}{c||}{\sf
  chuffed} & \multicolumn{2}{c}{\sf oscar}\\
 & time & sols & time & sols & time & sols & time & sols & time & sols\\
\hline
ren10 & 0.5 & 108 & 7.5 & 108 & 6.8 & 108 & 38 & 105 & 2.3 & 110 \\
ren15 & 368 & 949 & --- & 545 & -- & 565 & --- & 343 & 61 & 891 \\
ren20 & --- & 998 & --- & 382 & -- & 392 & --- & 381 & --- & --- \\ \hline
ren10-mg & 1.8 & 41 & 2.8 & 41 & 1.3 & 45 & 5 & 38 & n.a. & n.a. \\
ren15-mg & 14 & 135 & 247 & 135 & 541 & 145 & --- & 128 & n.a. & n.a. \\
ren20-mg & --- & 925 & --- & 292 & --- & 294 & --- & 171 & n.a. & n.a.
\end{tabular}
}
\end{table}

In Table \ref{tab:mo} we compare the MiniSearch approach with different
solvers and Oscar~\cite{oscar}, which has an efficient dedicated propagator for
multi-objective optimisation~\cite{oscar-chaus-biobj}. The
first three lines use the first-fail variable ordering used in Oscar, the
last three use a \textit{max-regret} ordering over the distance variables,
as found in MiniZinc's TSP models. The gecode-api results indicate that
file-based restarts lead to much slower solving times. The number of
intermediate solutions also has a big influence on runtime, as using a
better variable order leads to both smaller solution sets and smaller
runtimes.

\vpar
\subsection{CP-net}

The following experiments consider a variant of the Photo problem, where the goal is to find an ordering of friends such that the number of preferences regarding whom to stand next to for a group photo is maximized. We here consider the case that preferences are supplied as a CP-net: each person indicates a number of people (parents in the CP-net) and their preferences considering the locations of these people. Such a CP-net can be partial, i.e., it can contain disconnected components, which our method can handle without any modification.

We randomly generated CP-nets with $n$ people ($5\leq n\leq 20$) and for each person between 0 and $k$ parents (sampled uniformly per person, $1\leq k \leq 10$). The induced order in the CPTs corresponds to preferring smaller average distance to the parent(s). We again report only the forward pass of computing all complete and equivalence-free solutions. Larger $n$ and larger $k$ lead to larger CP-nets and runtimes, though that depends very much on the actual CP-net generated.

\begin{table}[t]
\caption{CP-net photo-like setting, runtimes and number of solutions (forward pass), --- indicates timeout after 30 min.\label{tab:cpnet}}
\centering
\scalebox{1}{
\hspace{-20px}
\begin{tabular}{l||r|r||r|r||r|r||r|r||r|r}
\centering
Instance & \multicolumn{2}{c||}{\sf gecode-api} & \multicolumn{2}{c||}{\sf
  gecode} & \multicolumn{2}{c||}{\sf ortools} & \multicolumn{2}{c||}{\sf
  chuffed} & \multicolumn{2}{c}{\sf optcpx}\\
 & time & sols & time & sols & time & sols & time & sols & time & sols\\
\hline
\small \tt 10v-4p-1 & 109 & 76 & 126 & 78 & 144 & 81 & 110 & 61 & 192 & 112\\
\small \tt 10v-4p-2 & 14 & 14 & 46 & 43 & 17 & 16 & 7 & 7 & 20 & 18\\
\small \tt 10v-4p-3 & 31 & 29 & 51 & 43 & 14 & 12 & 80 & 57 & 78 & 61\\
\small \tt 10v-4p-4 & 397 & 673 & 782 & 285 & 496 & 331 & 547 & 378 & --- & 753\\
\small \tt 10v-4p-5 & 15 & 68 & 101 & 289 & 34 & 95 & 14 & 42 & 24 & 76\\
\end{tabular}
}
\end{table}
 
Table \ref{tab:cpnet} shows results on 5 different networks generated with $n=10$ and $k=4$. No variable/value ordering strategy was imposed. The number of solutions found clearly has an influence on runtime, where the number of solutions is not only specific to the problem at hand, but also to the search order chosen by the solver. We expect a method that uses the expensive (traditional) dominance checks to perform worse.

\section{Related Work and discussion} \label{sec:relwork}

As discussed in the introduction, most related is the work on preferences in SAT~\cite{DBLP:conf/ecai/RosaGM08,DiRosa:2010:SSP:1842832.1842846}, where a preference can be defined over individual literals. They identify \textit{preference formulas} for these tasks, which correspond to dominance nogoods, and use incremental SAT solving. Our framework generalizes this to a wider range of tasks and different solving technology.

As with any generic method, one cannot expect to obtain the most efficient method for each of the covered tasks. Indeed, specialized methods have been developed for MaxSAT~\cite{DBLP:conf/sat/DaviesB13}, minimum correction subsets~\cite{DBLP:conf/ijcai/Marques-SilvaHJPB13} and X-minimal models~\cite{BenEliyahu–Zohary20051} that can be more efficient than a SAT with preferences approach, but only for their specific task.
For multi-objective optimization in CSPs, specialized propagation algorithms exist that filter the search space more effectively~\cite{DBLP:conf/ecai/Gavanelli02,oscar-chaus-biobj}. Similarly for other forms of preference~\cite{junker2004preference}.

Nevertheless, recent applications of data mining using SAT~\cite{DBLP:conf/pkdd/JabbourSS13} and CSP~\cite{cp4im_domin} demonstrate the need for generic methods for handling novel \textit{solution dominance} settings, for example involving \textit{conditional} dominance relations.

Many works in CP-nets focus on consistency and dominance
testing~\cite{Domshlak02cp-nets-,DBLP:journals/jair/BoutilierBDHP04,Boutilier04preference-basedconstrained}. A
branch-and-bound style algorithm for finding all non-dominated solutions
given additional constraints has been
studied~\cite{Boutilier04preference-basedconstrained}; it uses expensive (PSPACE complete~\cite{DBLP:journals/jair/GoldsmithLTW08}) dominance checks in case the ordering query returns false. Furthermore the search (variable order) in their method is driven by the CP-net's structure. In contrast, our method proposes a novel dominance relation that is cheap to evaluate and provides an over-approximation; it can be used with any existing solver and for any variable order.

A recent extension of Answer Set Programming~\cite{DBLP:conf/aaai/BrewkaD0S15} covers some of the tasks in this paper too (no CP-nets or domain-specific relations), but within the stable model semantics. They provide a language extension for expressing preference relations with a preference type (e.g. less, subset, pareto) and preference elements (the variables).
Our language extension is closer to the original dominance relation which can make it easier to specify domain-specific dominance nogoods.

The concept of \textit{dominance} is also used in different contexts in the constraint programming community.
Dominance breaking~\cite{domjournal} for COPs
creates constraints
that, given a mapping $\sigma$, prevent the finding of
solutions $\theta$ such that $\sigma(\theta)$ is a
better solution of the COP.
They can drastically improve solving performance.
\cite{domjournal} rely on a notion of dominance relation that applies to all
valuations and not just solutions, so that sub-trees can be
pruned during the search.
In that sense they are complementary to the solution dominance we consider
in this paper.  Indeed an interesting direction for further work is to
extend dominance breaking to arbitrary solution dominance problems.

\section{Conclusions}

We introduced the concept of \textit{solution dominance}, where the dominance relation is a preorder over the solutions of a CSP.
We call the resulting problems Constraint Dominance Problems, and this captures single/lexicographic/multi-objective optimization, X-minimal models, weighted MaxCSP, minimum correction subsets as well as a novel dominance relation for reasoning over CP-nets, as well as other dominance relations. 
We provide a natural and declarative extension to \MiniZinc for specifying Constraint Dominance Problems, based on MiniSearch.

Preferences and (solution) dominance have a history in CP research, as discussed above. Two directions for future work hence emerge: 1) which other preference~\cite{junker2004preference} or dynamic solving settings~\cite{10.1007/978-3-319-07046-9_6} fit the solution dominance framework; and 2) what specialised solving methods that have been investigated for one category (e.g. MaxCSP or Multi-objective) can also be applied to other settings? From a modeling perspective one can also ask the question whether it can be automatically detected from a dominance nogood specification, that a more specialised algorithm could be used (e.g. for lexicographic or bi-objective optimisation).

\bibliography{bib}
\bibliographystyle{splncs03}

\end{document}